\documentclass{article}





\usepackage[preprint]{neurips_2020}
\usepackage[utf8]{inputenc} 
\usepackage[T1]{fontenc}    
\usepackage{hyperref}       
\usepackage{url}            
\usepackage{booktabs}       
\usepackage{amsfonts}       
\usepackage{nicefrac}       
\usepackage{microtype}      
\usepackage{float}
\usepackage{times}
\usepackage{soul}
\usepackage{url}
\usepackage[utf8]{inputenc}
\usepackage{hyperref}
\usepackage[font =small]{caption}
\usepackage{graphicx}
\usepackage{subfigure}
\usepackage{mathpartir}
\usepackage{amsmath}
\usepackage{amsthm}
\usepackage{color}
\usepackage{booktabs}
\usepackage{algorithm}
\usepackage{algorithmicx}
\usepackage{algpseudocode}
\usepackage{dsfont}
\usepackage{amssymb}
\usepackage{wrapfig}
\usepackage{soul}
\setcitestyle{square,sort,comma,numbers}

\usepackage{color}

\urlstyle{same}
\newtheorem{example}{Example}
\newtheorem{theorem}{Theorem}
\newtheorem{definition}{Definition}

\newtheorem{lemma}{Lemma}


\title{Global Robustness Verification Networks}

%

 \author{%
   Weidi Sun\\
   School of Mathematical\\
   Peking University\\
   Haidian Qu 100871,Beijing, China \\
   \texttt{weidisun@pku.edu.cn} \\
   \And
   Coauthor \\
   Yuteng Lu\\
   School of Mathematical\\
   Peking University\\
   Haidian Qu 100871,Beijing, China \\
   \texttt{1701210111@pku.edu.cn} \\
   \And
   Coauthor \\
   Xiyue Zhang\\
   School of Mathematical\\
   Peking University\\
   Haidian Qu 100871,Beijing, China \\
   \texttt{zhangxiyue@pku.edu.cn} \\
   \And
   Coauthor \\
   Zhanxing Zhu\\
   School of Mathematical\\
   Peking University\\
   Haidian Qu 100871,Beijing, China \\
   \texttt{zhanxing.zhu@pku.edu.cn} \\
   \And
   Coauthor \\
   Meng Sun\\
   School of Mathematical\\
   Peking University\\
   Haidian Qu 100871,Beijing, China \\
   Center for Quantum Computing, Peng Cheng Laboratory\\
   Shenzhen Shi 518055, China \\
   \texttt{sunmeng@math.pku.edu.cn} \\
 }

\begin{document}

\maketitle
\begin{abstract}
 The wide deployment of deep neural networks, though achieving great success in many domains, has severe safety and reliability concerns. Existing adversarial attack generation and automatic verification techniques cannot formally verify whether a network is \emph{globally robust}, i.e., the absence or not of adversarial examples  in the input space.
 To address this problem, we develop a global robustness verification framework with three components: 1) a novel rule-based ``back-propagation'' finding which input region is responsible for the class assignment by logic reasoning; 2) a new network architecture \emph{Sliding Door Network (SDN)} enabling feasible rule-based ``back-propagation''; 3) a region-based global robustness verification (RGRV) approach. Moreover, we demonstrate the effectiveness of our approach on both synthetic and real datasets. 
\end{abstract}

\section{Introduction}\label{sec:intro}
Deep Neural Networks (DNNs) have been applied to a variety of domains and achieved great success. 
Reliance on DNNs' decisions makes their behavior reliability of high importance. 
Recent research has shown that the safety of DNNs is threatened by their susceptibility to human-imperceptible adversarial 
perturbations~\cite{propertie,perturbation,Biggio_weak}. 

To explore the adversarial robustness of neural networks, two aspects have been considered:  \emph{crafting adversarial examples} and \emph{automatic verification}.  
 Given an input sample, adversarial example generation techniques~\cite{FindAE2,Carlini017,PapernotMJFCS16,feature,SMT} 
fail to guarantee that no adversarial example exists around the given input, when they cannot generate adversarial examples for it.
The efforts in \emph{automatic verification}  mainly focus on the guarantee of 
local robustness~\cite{MirmanGV18,Huang,WuWRHK20,RuanHK18,GehrMDTCV18}, i.e., the robustness of an input's neighborhood.
These verification approaches can 
provide a rigorous local robustness proof if adversarial examples do not exist in a local region. 
However, the local robustness only takes a small part of the input space into account, and thus cannot guarantee reliability of the whole network for every possible input.  

Some attempts have been made towards the verification and evaluation
of \emph{global robustness}, i.e., finding out whether no adversarial example exists in the input space of a network  ~\cite{Hamming,SMT}.  
Though the SMT/SAT-based method in~\cite{SMT} takes global robustness into account, the definition for global robustness in~\cite{SMT} cannot be satisfied by the inputs near the classification boundary. In other words, no network can satisfy this definition.
The technique developed in~\cite{Hamming} evaluates the local robustness of each sample in a test dataset and 
treats the expected value of evaluation results as the indicator of ``global robustness''. The technique in~\cite{Hamming} can be considered as finding expected maximum safe radius over the test dataset.
Thus, the selection of the test dataset directly influences the estimation in~\cite{Hamming},
and the global robustness cannot be formally guaranteed in general. 
We can easily identify  two stumbling blocks on the path of global robustness verification: the \emph{complex activation patterns} and \emph{large input space}.
It is computationally prohibited to analyze all possible activation patterns or  traverse input space to guarantee global robustness.
Thus existing testing and verification techniques are infeasible to handle the global robustness verification for DNNs.


In this paper, 
we develop a feasible global verification framework with three components: 
1) a novel rule-based ``back-propagation'' which is used for
 mapping classification rules from output to input to find which input region is responsible for the corresponding class assignment\footnote{Note that this ``back-propagation'' is entirely different from typical use of back-propagation for evaluating the gradient with respect to the weight parameters in DNNs.}, 2) a new network design \emph{Sliding Door Network (SDN)} that enables feasible rule-based ``back-propagation'', 3) a region-based global robustness verification (RGRV) approach by finding ``adversarial regions''. 
Particularly, we address the ``two stumbling blocks'' by two means. 
Firstly, we design a new activation function \emph{Sliding Door Activation (SDA)}, with which the number of possible activation patterns 
is dramatically reduced to circumvent the complexity issue.  
Secondly, instead of treating a single input as the foundation ``atom'' of global robustness analysis, we cluster the input space into multiple classification regions to address the input space explosion challenge.  
To the best of our knowledge, this is the first work that can achieve global robustness formally with only slight drop of classification accuracy compared with classic DNNs.  
We evaluate the effectiveness of our framework on the MNIST~\cite{MNIST} dataset. We also design a synthetic case study to show the feasibility of our global verification method.

The rest of this paper is structured as follows. 
We introduce the rule-based ``back-propagation'' in Section~\ref{sec:rule}. 
The network design and the corresponding rule-based back-propagation method are described in Section~\ref{sec:SDN_SDNMAP}. 
Section~\ref{sec:globalrobustness} presents the RGRV approach. 
We evaluate the usefulness of SDN and effectiveness of RGRV in Section~\ref{sec:casestudy}. Section~\ref{sec:conclusion} summarizes our work.



\paragraph{Notations.}\label{sec:notations}
\begin{wrapfigure}[9]{r}{0.35\textwidth}
\vspace{-0.5cm}
\includegraphics[width=0.33\columnwidth]{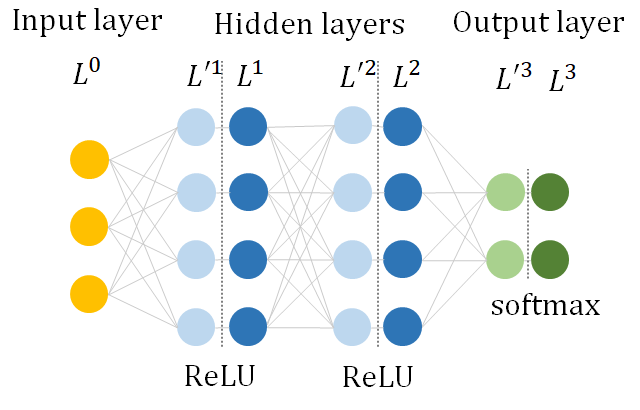}
\caption{A DNN with two hidden layers and one output layer. }
\label{Fig}
\end{wrapfigure}
For the convenience of presentation, each traditional layer \(Layer^h\) (\(0< h\leq H\)) is treated as two virtual layers:  pre-activation and activation layer, denoted by \({L'}^h\) and \({L}^h\), respectively.  
An example is shown in Figure~\ref{Fig}, where the activation layers are \(L^1,L^2,L^3\) and the pre-activation ones are  \({L'}^1,{L'}^2,{L'}^3\).
The \(i\)-th neurons in \(L^h\) and \({L'}^h\) are denoted as \(x^h_{i}\) and \({x'}^h_{i}\), respectively; the weights and the corresponding biases connecting \(L^{h-1}\) and \({L'}^h\) are represented as \(\omega^h_{ij}\) and \(b^h_j\); the activation function of \({L'}^h\) is \(f^h\), such as ReLU or softmax.

\section{Rule-based back-propagation for DNNs}\label{sec:rule}


Due to the colossal input space of DNNs, exhausting all possible inputs with traditional testing methods is infeasible. Thus we develop a family of classification rules to divide the input space into several regions. These regions can simplify the global robustness verification significantly. The classification rules in the input space could be achieved by the proposed rule-based back-propagation, as elaborated below. Before introducing the classification rules in detail, we first present a warm up example.

\begin{example}\label{example:warmup}
\emph{
Considering the one-layer network in Figure~\ref{fig_back:a}, a classification rule in the output space is (\(y_0 > y_1\)) which represents a blue region in output space shown as Figure~\ref{fig_back:b}.  The back-propagation we proposed aims for mapping classification rules to the input space. For example, the activation pattern ``all neurons are active'' means that (\({y}_0={y'}_0  \wedge {y}_1 = {y'}_1 \wedge {y'}_0 >0 \wedge {y'}_1 >0\)).
As (\({y}_0={y'}_0 \wedge {y}_1 = {y'}_1\wedge {y'}_0 = x_1 \wedge {y'}_1 = x_0\)),  (\(y_0 > y_1\)) is equivalent to (\(x_1 > x_0\)) and (\({y'}_0 >0 \wedge {y'}_1 >0\)) is equivalent to (\(x_1 > 0 \wedge x_0 >0\)). Thus the mapping result of output space (\(y_0 > y_1\)) to input space is (\(x_0 > 0 \wedge x_1 > 0 \wedge x_1 > x_0\)). If we change the activation pattern to ``\(y_0\) is active and \(y_1\) is inactive'', the equivalent condition of this activation pattern is 
(\({y}_0={y'}_0  \wedge {y}_1 = 0 \wedge {y'}_0 >0 \wedge {y'}_1 < 0\)), because  ReLU assigns 0 to \(y_1\). Thus (\({y'}_0 >0 \wedge {y'}_1 < 0\)) is equivalent to (\(x_1 > 0 \wedge x_0 < 0 \)); (\(y_0 > y_1\)) is equivalent to  (\(x_1 > 0\)); the mapping result is (\(x_0 < 0 \wedge x_1 > 0\)).  Obviously, the activation pattern determines the mapping result. The mapping result (\(x_0 > 0 \wedge x_1 > 0 \wedge x_1 > x_0\)) represents a blue region in input space shown in Figure~\ref{fig_back:c}. We name this blue region as classification region, indicating its responsibility to the class assignment.
}
\end{example}


\begin{wrapfigure}[17]{r}{0.45\textwidth}
\begin{minipage}[c]{0.5\linewidth}
\subfigure[one-layer network] { \label{fig_back:a}
\includegraphics[width=1\columnwidth]{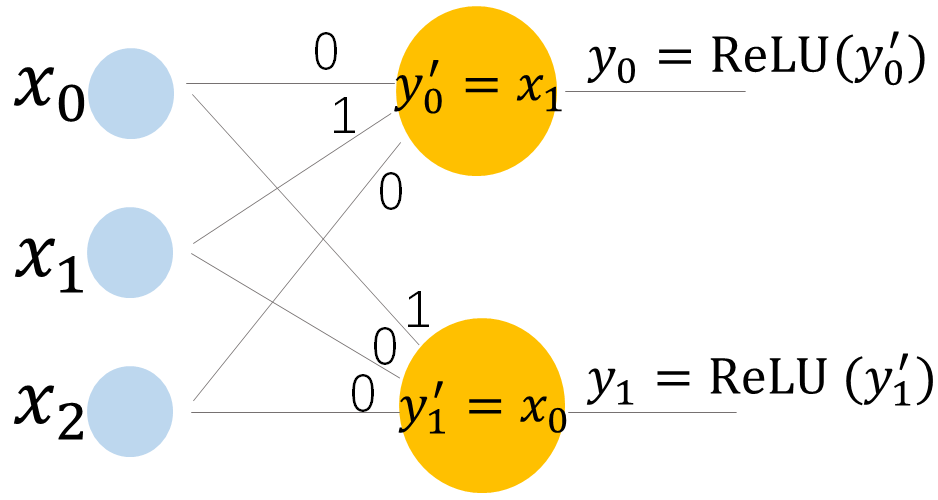}
}
\centering
\subfigure[region of (\(y_0 > y_1\))] { \label{fig_back:b}
\includegraphics[width=0.8\columnwidth]{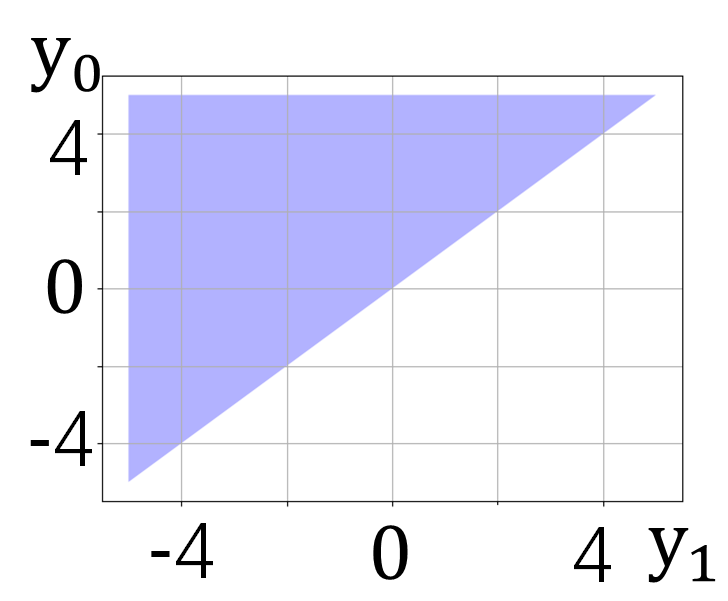}
}
\end{minipage}
\hspace{0in}
\begin{minipage}[c]{0.45\linewidth}
\subfigure[region of (\(x_0 > 0 \wedge x_1 > 0 \wedge x_1 > x_0\))] { \label{fig_back:c}
\includegraphics[width=1\columnwidth]{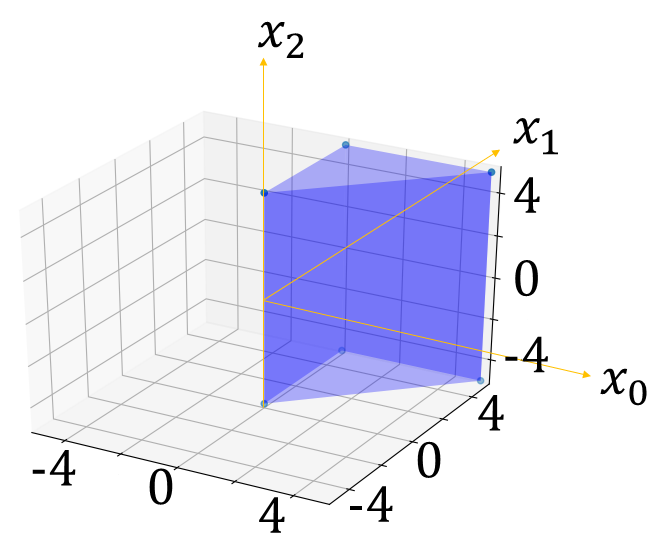}
}
\end{minipage}
\caption{The one-layer network and regions of classification rules.}
\label{fig_back}
\end{wrapfigure}
With the intuition from the warm up example, we now  elaborate the rule-based back-propagation layer by layer for deep neural networks.

There are many inequations recorded as \(\mathds{P}^h_{\gamma\eta}\) in \(Layer^h\). These inequations make up the disjunctive normal form \(\vee_\gamma\wedge_\eta\mathds{P}^h_{\gamma\eta}\) which describes how neural networks classify the inputs. For simplicity, if we select a \(\mathds{P}^h_{\gamma\eta}\) in \(Layer^h\) this \(\mathds{P}^h_{\gamma\eta}\) will be denoted as \(\mathds{P}^h = \mathop{\sum}\limits_i c_{i}x_i^{h}+b >0\).
We can easily provide the classification rules in output space. For example, ``the output belongs to class $k$'' is \(\mathop{\wedge}\limits_{j(j \neq k)} y_k>y_j\) where \(y_j\)s \((0\leq j<m)\) are the output values and \(m\) is the number of classes for output.
To obtain the classification rules in input space, we take a typical DNN\footnote{The activation functions in hidden layers and output layer of this DNN are ReLU and softmax, respectively.} as an example and propose a back-propagating function. 
During the back-propagation from \(Layer^h\) to \(Layer^{h-1}\), the \(\mathds{P}^h\)s should be substituted and the \(\wedge\)s and \(\vee\)s should be retained to \(Layer^{h-1}\), thus we apply our function to each \(\mathds{P}^h\) instead of the conjunctive normal form. The recursive call of this function can back-propagate the classification rules layer by layer to the input space of the network. 
Since the output layer and the hidden layers should be treated in different ways, we divide the function into two parts: the output and hidden layer part. 

The comparison rules like \(y_k > y_j\) can be directly mapped to the corresponding pre-activation layer based on the order-preserving activation function, e.g., softmax, of the output layer. We replace every variable \({y}_j\) in inequations with the corresponding polynomial \(\sum_i \omega_{ij}^H x_i^{H-1} + b_j^H\)~to obtain the classification rule in \(Layer^{H-1}\). Thus the output layer part is the function
\begin{equation}\label{eq:MAP-OUT2}
\textstyle  \emph{MAP-OUT}({y}_k>{y}_j) = \sum_i\omega_{ik}^H x_i^{H-1} + b_k^H> \sum_i \omega_{ij}^H x_i^{H-1}+b_j^H
\end{equation}
and the mapping result of \(\mathop{\wedge}\limits_{j(j \neq k)} y_k>y_j\) is \(\mathop{\wedge}\limits_{j(j \neq k)} \emph{MAP-OUT}({y}_k>{y}_j)\), i.e., the classification rules in \(Layer^{H-1}\).

The hidden layer part is a function \emph{MAP-HIDDEN}. Since each linear inequation in \(L^h\) can be simplified into the form \(\mathop{\sum}\limits_i c_{i}x_i^{h}+b >0\), we select an inequation \(\mathds{P}^h = \mathop{\sum}\limits_i c_{i}x_i^{h}+b >0\) as the input of \emph{MAP-HIDDEN} to show how \emph{MAP-HIDDEN} works. The hidden layers cannot be processed in the same way as output layer because of the activation patterns which determine the mapping result.
We denote the set of active neurons' indexes by \(\Theta\) and use \(\Theta\) to represent the activation pattern. For simplicity, we record the mapping result of \(\mathds{P}^h\) under \(\Theta^h\) as \emph{MAP-FIX\((\Theta^h,\mathds{P}^h)\)} where \(\Theta^h\) is the activation pattern of \(Layer^{h}\). \emph{MAP-FIX\((\Theta^h,\mathds{P}^h)\)} is the conjunction of some classification rules in \(Layer^{h-1}\).
The function \emph{MAP-HIDDEN} is shown as follows where \(\Delta^h\) denotes all activation patterns of \(Layer^{h}\):
\begin{equation}\label{eq:MAP-HIDDEN}
\textstyle  \emph{MAP-HIDDEN}(\Delta^h,\mathds{P}^h)= \mathop{\vee}\limits_{ \Theta^h\in \Delta^h}\textstyle\emph{MAP-FIX\((\Theta^h,\mathds{P}^h)\)}
\end{equation}
The mapping result of \emph{MAP-HIDDEN} is the disjunction of all the classification rules in \(Layer^{h-1}\).
As each neuron has two activation states, there are \(2^{m^h}\) activation patterns in \(\Delta^h\) where \(m^h\) is the number of neurons in \(Layer^{h}\). The time cost of whole  back-propagation is \(O(\Pi_{h} 2^{m^h})= O(2^{\sum_h m^h})\). Such immense time cost makes the above mapping NP-hard and infeasible.

\section{Sliding Door Network for Feasible Back-propagation}\label{sec:SDN_SDNMAP}

To handle the complexity issue, we present a novel network design, SDN, and the corresponding rule-based back-propagation method \(M_{SDN}\). SDN reduces the size of \(\Delta\) by grouping the neurons in each layer to overcome the infeasibility problem in back-propagating classification rules for DNNs.


\subsection{Sliding Door Network}\label{sub_sec:SDN}
\begin{wrapfigure}[10]{r}{0.6\textwidth}
\includegraphics[width=0.6\columnwidth]{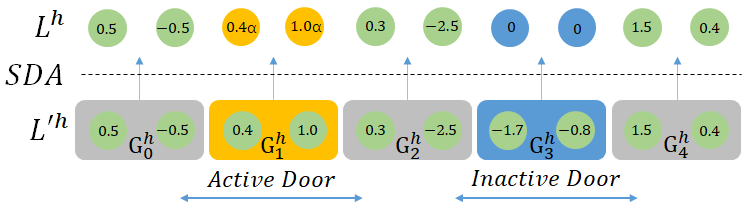} 
 \caption{Sliding Door Activation. \(G_1^h\) and \(G_3^h\) are active door and inactive door respectively and other activation results are the copy of other groups.}
\label{fig:SDA}
\end{wrapfigure}
Compared with typical DNNs, SDN has two different components: a novel  activation function SDA and the loss function design for supporting SDA.

\textbf{Sliding Door Activation.}  SDA takes a  pre-activation layer into account and divides neurons into several groups evenly. For example, the layer \({{L'}^h}\) in Figure~\ref{fig:SDA} with 10 neurons is divided into 5 groups which are represented as \(G^h_j = \{ {x'}_i^h| 2j \leq i < 2j + 2 \}\ (0\leq j < 5)\). These groups are classified by SDA into three categories: \emph{active group} with all positive neurons (e.g., \(G_1^h\) and \(G_4^h\) in Figure~\ref{fig:SDA}), \emph{inactive group} in which all neurons are negative (e.g., \(G_3^h\) in Figure~\ref{fig:SDA}), and \emph{trivial groups} with mixing of both positive and negative neurons (i.e., \(G_0^h\) and \(G_2^h\) in Figure~\ref{fig:SDA}).  

In order to reduce the complexity, we select the first active (inactive) group as \emph{active} (\emph{inactive}) \emph{door} for each pre-activation layer. For example in Figure~\ref{fig:SDA}, \(G_1^h\) and \(G_3^h\) are active door and inactive door respectively. Based on the assigned doors, we define SDA as: 
\begin{align}
x^h_i = SDA(x'^h_i)
=&\begin{cases}
0 &   \text{if }  x'^h_i \text{ belongs to inactive door};\\
\alpha x'^h_i &  \text{if }  x'^h_i \text{ belongs to active door}; \\
 x'^h_i &  \text{otherwise}.
\end{cases}
\end{align}

\begin{wrapfigure}[14]{r}{0.3\textwidth}
\includegraphics[width=0.3\columnwidth]{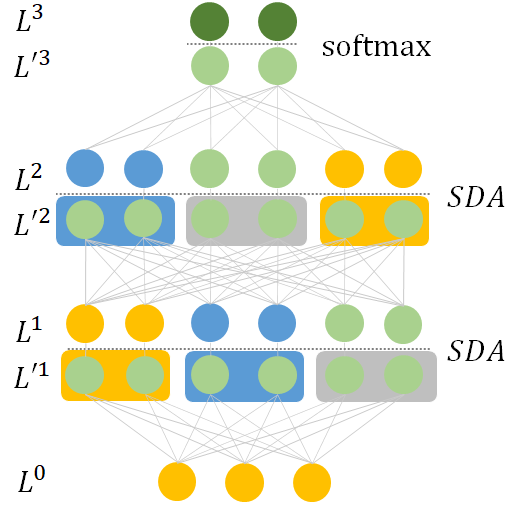} 
\caption{Architecture of SDN}
\label{fig:SDN}
\end{wrapfigure}
To increase the network expressiveness, SDA strengthens the active door by \(\alpha \) and  assigns \(0\) to inactive door's neurons. Other groups are sent to the corresponding activation layer directly. During the training, for each pre-activation layer, the position of the two doors might change instantly up to the states of the groups, behaving like a sliding door, thus the name of 
our activation function.  Figure~\ref{fig:SDN} shows the entire network architecture, replacing the ReLU in classic DNNs with the proposed SDA for each layer.

\textbf{Loss function design.} If a pre-activation layer cannot provide \emph{active} or inactive door, the expressiveness of SDN will be weakened. To avoid this issue, we design regularization term to penalize the absence of either of the two doors. If the active (inactive) door does not appear in \({L'}^h\), we will find the group \(G_{\alpha}^h\) (\(G_{\beta}^h\)) in \({L'}^h\) with most active (inactive) neurons, and adjust the weights to make the negative (positive) neurons in \(G_{\alpha}^h\) (\(G_{\beta}^h\)) tend to be positive (negative) so as to create active (inactive) groups.
Thus, besides the typical data fitting loss, we add a regularization term to encourage the emergence of such groups, defined as: 
\begin{align}
L(W,b) = \sum_{i=1}^n  (y_i-\hat{y}_i)^2  + \lambda \sum_h (\sum_{x_i'^h \in G_{\alpha}^h, x_i'^h < 0} (-x_i'^h) + \sum_{x_i'^h \in G_{\beta}^h, x_i'^h >0 } x_i'^h)
\end{align}
where $\{ W, b\}$ denotes all the weights and biases to be trained, and  \(\lambda\) is the user-given penalty parameter.

\subsection{Rule-based Back-propagation for SDN}\label{sub_sec:sdamap}
As \emph{MAP-OUT} can be reused for the back-propagation of SDN's output layer, we focus on back-propagation between hidden layers in this section. The constructing process of \emph{MAP-HIDDEN} for SDN is as follows.

We denote the set of neurons in the active door of layer \({L'}^h\) as \(\Theta^h_A\), the set of neurons in the inactive door as \(\Theta^h_I\), and other neurons are in \(\Theta^h_T\).
Considering the condition that a rule in \(L^h\) is \(\mathds{P}^h = \mathop{\sum}\limits_i c_ix_i^{h}+b >0\),  and the activation pattern \(\Theta^h\) is fixed where \(\Theta^h = \{\Theta^h_A, \Theta^h_I, \Theta^h_T\}\),  we record the mapping result of \(\mathds{P}^h\) under \(\Theta^h\)  as MAP-FIX\((\Theta^h,\mathds{P}^h )\) with three components:

1) \(SDNInherit\). 
\begin{equation}
    SDNInherit=\textstyle{\mathop{\sum}\limits_{i\in \Theta^h_A}\!\!\!\! \alpha c_i((\mathop{\sum}\limits_t\omega_{ti}^{h} x_t^{h-1})+ b_i^{h})+\!\!\!\!\mathop{\sum}\limits_{i\in \Theta^h_T}\!\!\!\!c_i((\mathop{\sum}\limits_t \omega_{ti}^{h} x_t^{h-1}) + b_i^{h})+b>0},
\end{equation}
where we replace the neurons in \(\mathds{P}^h\) belonging to \(\Theta^h_A\) with the corresponding polynomial, i.e., \( \sum_t \omega_{ti}^h x_t^{h-1} +b_i^h\) multiplied by \(\alpha\) due to SDA activation, the neurons in \(\Theta^h_T\) with the corresponding polynomial meanwhile remove the  neurons in \(\Theta^h_I\) to obtain \(SDNInherit\).

2) \(SDNActiveCon\). 
\begin{equation}
    SDNActiveCon = \textstyle{\mathop{\bigwedge}\limits_{i \in \Theta^h_A}((\mathop{\sum}\limits_t \omega_{ti}^{h} x_t^{h-1}) + b_i^{h-1} > 0)}
\end{equation}
It describes the rules that ``all the corresponding pre-activation neurons of \(\Theta^h_A\) are greater than 0'' and we replace these pre-activation neurons with corresponding polynomial.
 
3) \(SDNInactiveCon\). 
\begin{equation}
    SDNInactiveCon = \textstyle{\mathop{\bigwedge}\limits_{i \in \Theta^h_I}((\mathop{\sum}\limits_t -\omega_{ti}^{h} x_t^{h-1}) - b_i^{h} > 0)}
\end{equation}
It describes the rules that ``all the corresponding pre-activation neurons in \(\Theta^h_I\) are less than 0'' and we replace these pre-activation neurons with corresponding polynomial.
The function \emph{MAP-FIX} is shown as follows:
\begin{equation}\label{eq:SDN-FIX}
 \textstyle\emph{MAP-FIX\((\Theta^h,\mathds{P}^h )\)} = {\emph{SDNInherit}\ \wedge\ \emph{SDNActiveCon}\ \wedge\  \emph{SDNInactiveCon}}
\end{equation}
Taking all the activation patterns into account,  we can obtain the function \emph{MAP-HIDDEN}. 
\begin{equation}
\textstyle  \emph{MAP-HIDDEN}(\Delta^h,\mathds{P}^h)= \mathop{\vee}\limits_{ \Theta^h\in \Delta^h}\textstyle\emph{MAP-FIX\((\Theta^h,\mathds{P}^h)\)}
\end{equation}
The combination of \emph{MAP-OUT} and \emph{MAP-HIDDEN} is the complete back-propagating  function \(M_{SDN}\):
\begin{align}
M_{SDN}(\Delta^{h},\mathds{P}^h)
=&\begin{cases}
\emph{MAP-OUT}(\mathds{P}^h) &  h=H\\
 \emph{MAP-HIDDEN}(\Delta^{h}, \mathds{P}^h) &  h<H
\end{cases}
\end{align}
Thus the mapping result of all the rules \(\vee_\gamma\wedge_\eta\mathds{P}^h_{\gamma\eta}\) in \(Layer^h\) is  \(\vee_\gamma\wedge_\eta M_{SDN}(\Delta^{h},\mathds{P}^h_{\gamma\eta})\) which is the  collection of  rules for \(Layer_{h-1}\).  
Each \(|\Delta^h|\)  equals to \({m_h}(m_h -1)+ 2m_{h} + 1\) where \(m_{h}\) is the number of groups in \({L'}^h\). The SDN has \(O(\Pi_{i} {m_i}^2)\) activation patterns which is greatly less than the number of DNN's activation patterns \(O(2^{\sum_h m^h})\) and its rule-based back-propagation becomes more feasible. 

\(M_{SDN}\) maps the \emph{explicit rules}, but ignores the \emph{implicit rules}. \emph{Implicit rules} are the constraints from \emph{trivial groups} guaranteeing that compared with other active (inactive) groups in \(Layer^h\) the active (inactive) door \(G_i^h\) has the minimal \(i\). For example in Figure~\ref{fig:SDA}, the \emph{implicit rules} are:
\begin{align*}
\exists_{{x'}_i^h\in G^h_0} {x'}_i^h < 0 \wedge
\exists_{{x'}_i^h\in G^h_0} {x'}_i^h > 0 \wedge
\exists_{{x'}_i^h\in G^h_2} {x'}_i^h > 0
\end{align*}
The combination of \emph{explicit rules} and \emph{implicit rules} can be organized into the form \(\vee_\gamma\wedge_\eta\mathds{P}^h_{\gamma\eta}\). Each \(\wedge_\eta\mathds{P}^h_{\gamma\eta}\) represents a classification region like the blue region in Figure~\ref{fig_back:c}. Selecting a \(\mathds{P}^h = \mathop{\sum}\limits_i c_ix_i^{h}+b >0\) in  \(\wedge_\eta\mathds{P}^h_{\gamma\eta}\), a boundary of \(\wedge_\eta\mathds{P}^h_{\gamma\eta}\)'s classification region is  \( \mathop{\sum}\limits_i c_ix_i^{h}+b = 0\).
We will show that \emph{explicit rules} is sufficient for global verification in Theorem~\ref{theorem:common}.
\section{Region-based Global Robustness Analysis  }\label{sec:globalrobustness}

In this section, we define and address the global robustness verification problem by a region-based global robustness analysis (RGRA) approach.
Firstly, we provide the definition for global robustness:
\begin{definition}[Global robustness]\label{def:globalrobustness}
Given a network N, if there is no adversarial example in its input space, N is a globally robust network.
\end{definition}

An adversarial example exists in two types of regions: 1) the region which is isolated, 2) the region which is connected to the correctly classified region. Taking Figure~\ref{fig_ill} as an example, it shows a binary classification task where the black dashed line is orale decision boundary of class \(C_1\) and \(C_2\) and the orange solid line is the decision boundary determined by network. 
\begin{wrapfigure}[10]{r}{0.35\textwidth}
\includegraphics[width=0.35\columnwidth]{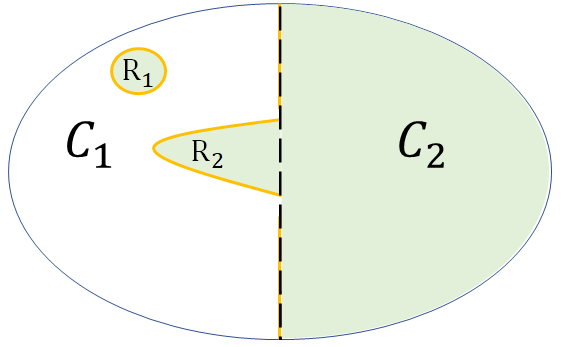}
\caption{Small-size isolated connected component and protruding region. \(R_1\) and \(R_2\) are the adversarial regions.}
\label{fig_ill}
\end{wrapfigure}
\(R_1\) and \(R_2\) are adversarial regions. \(R_1\) is the \emph{small-size isolated connected component} and \(R_2\) is the \emph{protruding region} which is connected to the correctly classified region. The adversarial examples belonging to \(C_1\) but classified as \(C_2\) tend to exist in \(R_1\) and \(R_2\).
Before presenting the definitions of \emph{protruding region} and \emph{small-size isolated connected components},
we first present a formal definition of the input space classification graph.

\begin{definition}[Classification graph]\label{def:graph}
Given a result of backward-mapping in the form of \(\vee_\gamma\wedge_\eta\mathds{P}^h_{\gamma\eta}\), we can build a \emph{classification graph} as a tuple \((V,E)\) where
\begin{itemize}
    \item \(V = \{v_i\ |\ v_i\ is\ the\ \wedge_\eta\mathds{P}^h_{\gamma\eta}\ in\ \vee_\gamma\wedge_\eta\mathds{P}^h_{\gamma\eta}\}\). In other words, each classification region can be defined as a vertex \(v_i\).
    \item \(E = \{(v_i,v_j)\ |\ v_i\ and\ v_j\ are\ adjacent\ vertexes\}\). Given two \emph{vertexes} \(v_i\) and \(v_j\), the corresponding classification regions in the input space are \(S_i\) and \(S_j\). Two \emph{vertexes} are adjacent iff their classification regions are adjacent in high-dimensional space, and formally, 
    \begin{align}
    \exists_x\exists_{\varepsilon>0}\{[\forall_{x'\in\ I}( x'\in S_i \vee x'\in S_j)]
    \wedge  (I \cap S_i \neq \emptyset) \wedge  (I \cap S_j \neq \emptyset)\}
    \end{align}
    where \(I =\{x'|\ ||x'-x||_\infty<\varepsilon\}\). 
\end{itemize}
\end{definition}

\begin{definition}[Limiting ball]\label{def:limit_ball}
Given a set of vertexes \(V'\), the stitching of their regions is \(S\) whose center of gravity is \(o\). The limiting ball \(B\) of \(V'\) can be defined as a ball whose center is \(o\) and radius \(d = max(\{dis|dis= ||x-o||_\infty,  x\in S\})\).
\end{definition}
With the description of classification graph and  limiting ball, we now provide the formal definition of two types of regions in which adversarial example exist. 
\begin{definition}[Small-size isolated connected component]\label{def:isolate}
Given a connected component \(S\) in classification graph and a \(\mathcal{R}\) (\(\mathcal{R} > 0\)), let \(B\) be \(S\)'s vertexes' limiting ball. If \(B\)'s radius is smaller than \(\mathcal{R}\), \(S\) is a small-size isolated connected component. 
\end{definition}
\begin{definition}[Protruding regions]\label{def:protruding}
Given a vertex \(v\) and a \(r\) (\(r > 0\)), let \(B\) be \(\{v\}\)'s limiting ball. The classification region of \(v\) is \(S\).  All points with same class as \(S\) in \(B\) constitute a set \(Cl(S)\). The volume of \(Cl(S)\) and \(B\) are recorded as \(vol(Cl(S))\) and \(vol(B)\) respectively.  If \(S\) is not a small-size isolated connected component and \(vol(Cl(S)) < r\times vol(B)\), \(S\) is a protruding region. 
\end{definition}
The first step of finding the ``adversarial regions'' is to construct the adjacency relationship between vertexes, i.e., building \(E\) in classification graph. The construction can be split into two phases: 1) traverse the vertexes \(v_i\) in \emph{classification graph}, and treat \(v_j\) (\(j\in \{t | t > i\}\)) as the potential adjacent vertexes; 2) find the common boundary shared by \(v_i\) and \(v_j\) if they are adjacent, and provide the proof that they are not adjacent otherwise. Intuitively, a common boundary between \(v_i\) and \(v_j\) means that they stick together on the boundary, formally defined as follows.
\begin{definition}[Common boundary]\label{def:common}
 Given two \emph{vertexes} \(v_i\) and \(v_j\) in the classification graph, the corresponding classification regions are \(S_i\) and \(S_j\). A boundary \(\mathbb{F}\) (\(\sum c_ix^0_i+b = 0\)) which belongs to \(v_i\) and \(v_j\) is defined as a common boundary iff
\begin{align}
    \exists_{x_0}\exists_{\varepsilon_0>0} [ \forall_{\mathbb{H'}\neq \mathbb{H}}\ I \cap \mathbb{H'} = \emptyset\ \wedge\ \forall_{x\in I}\ ( x\in S_i \vee x\in S_j)
    \wedge  I \cap S_i \neq \emptyset \wedge I \cap S_j \neq \emptyset ]
\end{align}
 where \(\mathbb{H} = \{x|\sum c_ix^0_i+b = 0\}\) is the set of points on \(\mathbb{F}\), \(\mathbb{H'}\) is the set of points on other boundaries \(\mathbb{F'}\) of \(v_i\) and \(v_j\), and \(I = \{x|\ ||x-x_0||_\infty<\varepsilon_0\}\).
\end{definition}

As the \emph{classification graph} is an undirected graph, the first phase only explores the edge \((v_i,v_j)\)  \( (j>i) \) to find the potential adjacent vertex \(v_j\). And the following 
Theorem~\ref{theorem:common} implies that by traversing the \(explicit\ rules\) of \(v_i\), we can find out all the \(v_j\) (\(j\in \{t | t > i\}\)) which share a common boundary with \(v_i\), i.e., all the adjacent vertexes.
\begin{theorem}\label{theorem:common}
Given \(v_i\) and \(v_j(j>i)\), there is a common boundary belonging to the  \(explicit\ rules\)  of \(v_i\) shared by them iff \(v_i\) and \(v_j\) are adjacent.
\end{theorem}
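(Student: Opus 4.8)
The plan is to prove the two directions of the biconditional separately, starting from the easier "if" direction and then attacking the more delicate "only if" direction where the role of \emph{explicit rules} versus \emph{implicit rules} becomes essential.

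\textbf{The "if" direction (adjacency implies a common explicit boundary).} Suppose $v_i$ and $v_j$ are adjacent. By Definition~\ref{def:graph} there is a point $x$ and an $\varepsilon>0$ such that the $\ell_\infty$-ball $I$ around $x$ is covered by $S_i\cup S_j$ and meets both. First I would argue that $x$ must lie on the topological boundary of $S_i$ (and of $S_j$): if $x$ were interior to $S_i$, a smaller ball would sit entirely inside $S_i$ and could not meet $S_j$, since $S_i$ and $S_j$ are defined by strict inequalities and hence are open and disjoint. Each $S_i$ is a convex polyhedron cut out by the finite conjunction $\wedge_\eta\mathds{P}^0_{\gamma\eta}$ (the back-propagated rules at the input layer), so the boundary of $S_i$ is a finite union of facets, each lying on a hyperplane $\sum c_\ell x^0_\ell + b = 0$ coming from some $\mathds{P}^0$. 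Shrinking $\varepsilon$, I can ensure $I$ meets only the facets through $x$; a standard dimension/covering argument then forces exactly one such hyperplane $\mathbb{H}$ to separate $S_i\cap I$ from $S_j\cap I$ locally (two disjoint convex open sets whose union is a full ball are separated by a single hyperplane), and that hyperplane is simultaneously a boundary of $v_i$ and $v_j$, i.e.\ a common boundary in the sense of Definition~\ref{def:common}. The remaining point is to check that this $\mathbb{H}$ comes from an \emph{explicit} rule of $v_i$ rather than an implicit one; I address this jointly with the other direction below.

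\textbf{The "only if" direction (common explicit boundary implies adjacency).} Conversely, suppose $\mathbb{F}:\ \sum c_\ell x^0_\ell + b = 0$ is a common boundary drawn from the explicit rules of $v_i$, shared with $v_j$. Pick $x_0$ on $\mathbb{F}$ witnessing Definition~\ref{def:common}, so that a small ball $I$ around $x_0$ is split by $\mathbb{F}$ into two half-ball pieces, one in $S_i$ and one in $S_j$, and no other boundary of $v_i$ or $v_j$ enters $I$. Then $I\subseteq S_i\cup S_j$ with $I\cap S_i\neq\emptyset$ and $I\cap S_j\neq\emptyset$, which is exactly the adjacency condition of Definition~\ref{def:graph}, with the same $x_0$ and $\varepsilon_0$. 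This direction is essentially bookkeeping once Definition~\ref{def:common} is unwound.

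\textbf{The crux: explicit rules suffice.} The real content — and the step I expect to be the main obstacle — is showing we never lose an adjacency by restricting attention to explicit rules, i.e.\ that every common boundary can be taken to arise from an explicit rule. An implicit rule at layer $h$ has the shape "$\exists_{{x'}^h_i\in G^h_k}\ {x'}^h_i>0$" (or $<0$): it is a strict inequality asserting that a trivial group is genuinely mixed, which pins down \emph{which} group is selected as the sliding door. The key observation is that such a constraint, back-propagated to the input layer, is never \emph{tight} on the interior of a maximal region where the door assignment is locally constant: if ${x'}^h_i = 0$ for the unique neuron realizing the witness, an arbitrarily small perturbation either keeps the group mixed (rule still satisfied, no boundary crossed) or changes the group's status, which would flip the door and move us to a different activation pattern $\Theta^h$ — but that transition is itself governed by an \emph{explicit} door-membership inequality (some ${x'}^h_i$ in the active/inactive door changing sign), which $M_{SDN}$ does record via $SDNActiveCon$/$SDNInactiveCon$ and $SDNInherit$. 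So on the facet where two regions actually touch, the separating hyperplane is always one carried by an explicit rule; implicit rules only carve the region into pieces that are not separated from their neighbors by those hyperplanes, hence contribute no edges of the classification graph. I would make this precise by a case analysis on what kind of rule the separating hyperplane $\mathbb{H}$ of the "if" direction comes from, showing that if $\mathbb{H}$ were implicit then either $x$ is interior to $S_i\cup$(other region) contradicting separation, or the crossing corresponds to a door switch already captured explicitly — in either case an explicit hyperplane through $x$ does the separating. Combining this with the two directions above yields the theorem.
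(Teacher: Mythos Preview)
Your ``only if'' direction and the overall decomposition into (i) adjacency $\Leftrightarrow$ existence of a common boundary and (ii) that boundary must be explicit are fine and match the paper's two-lemma structure. The gap is in your ``crux'' argument for (ii).

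You claim that when an implicit rule becomes tight and a door flips, ``that transition is itself governed by an \emph{explicit} door-membership inequality (some ${x'}^h_i$ in the active/inactive door changing sign)''. This is false. An implicit rule constrains a \emph{trivial} group $G^h_\ell$ with index $\ell$ smaller than the current door index; when the last positive (resp.\ negative) neuron of $G^h_\ell$ crosses zero, $G^h_\ell$ becomes fully inactive (resp.\ active) and, because $\ell$ is smaller, \emph{it} becomes the new door. The neuron that changed sign is in $G^h_\ell$, not in the current active or inactive door, so the separating hyperplane is \emph{not} recorded by $SDNActiveCon$ or $SDNInactiveCon$ of $v_i$. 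Implicit-rule facets are genuine facets of $S_i$ and are not duplicated by explicit rules.

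What you are missing is the role of the hypothesis $j>i$, which you never invoke. The paper orders activation patterns by a serial number that is monotone in the door indices layer by layer. Crossing an implicit-rule facet can only \emph{decrease} a door index at the first layer where the pattern changes, hence decreases the serial number; the region on the other side is some $v_u$ with $u<i$, not $v_j$ with $j>i$. The case analysis is then: a facet of $v_i$ either comes from a layer after the first-change layer $k$ (impossible, those neurons jump discontinuously), from a layer before $k$ (contradicts ``first change at $k$''), or from layer $k$; at layer $k$, an implicit rule would force $u<i$, so only explicit rules of $v_i$ survive. Without the ordering argument your proof does not close.

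A smaller point: your assertion that each $S_i$ is a convex polyhedron is only correct after implicit rules (which are existential, i.e.\ disjunctive) have been expanded into DNF; you should make that explicit rather than appeal to convexity as if it were automatic.
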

Proof: see appendix.
Finally, the global robust verification can be achieved by the following steps:
\begin{enumerate}
  \item Build classification graph. We can obtain the vertexes from the back-propagation result. Each vertex has some boundaries. For each boundary \(\mathbb{F}\) of \(v_i\), we select some points on it randomly and sample in the tiny neighborhood of these points. By feeding the samples into SDN, we can find out their activation patterns. For example, a sample's activation pattern represents \(v_j\), as this sample is in the tiny neighborhood of points on \(\mathbb{F}\), \(\mathbb{F}\) belongs to 
  \(v_j\). Thus \(v_i\) and \(v_j\) shares the boundary \(\mathbb{F}\) and we add edge \((v_i, v_j)\) to the edge set \(E\). 
 \item Find the limiting ball of classification regions and  connected components. For a  classification region \(S\), we can obtain the rough upper and lower bound of each dimension. Taking the inequation  \(\mathop{\sum}\limits_i c_ix_i^{0}+b >0\) as an example, if \(c_1>0\) and \(-2\leq x_i \leq 2\), the lower bound of \(x^0_1\) is \(x^0_1>-((\mathop{\sum}\limits_{i(i\neq 1\wedge c_i>0)}\!\!\!\! 2c_i + \!\!\!\!\!\mathop{\sum}\limits_{i(i\neq 1\wedge c_i<0)}\!\!\!\!\!-2c_i)+b)/c_1\). All these bounds form a ``box'' including \(S\). We take samples \(u_t\) (\(0\leq t<n\)) in this ``box" and select the samples in \(S\). \(o = \sum_t u_t/n\) is the estimated center. Denote \(d(u_t,o)\) as the distance between \(u_t\) and \(o\), the estimated radius is \(\max_t(d(u_t,o))\). Moreover, we can calculate the volume of the ``box'' \(vol(box)\), if there are \(m\) samples in \(S\),  \(vol(s) =\frac{m}{n}vol(box)\).  If a connected component consists classification regions \(S_k\)s, the estimated center is \(o = \frac{\sum_k o_k vol(S_k)}{\sum_k vol(S_k)}\) and \(\max_{t,k}(d(u_{kt},o))\) is the estimating radius where \(u_{kt}\)s are the samples in \(S_k\).
 
  \item Find small-size isolated connected components and protruding regions. Given a connected component, by comparing the radius of limiting ball with the \(\mathcal{R}\) given by user, we can find out whether a connected component is a small-size isolated connected component. For a classification region belonging to class \(C\), we take \(n\) samples in the limiting ball. If \(m\) of them belong to \(C\), calculate \(n/m\) and compare it with the user given \(r\). Then we can find out whether this region is a protruding region.
\end{enumerate}

\section{Experiments}\label{sec:casestudy}
The evaluation of our work concentrates on three aspects: 1) the utility of SDN on classification tasks, 2) generating adversarial examples, 3) the feasibility of global verification. In the first part, we evaluate our method on \emph{MNIST} database with typical DNNs as baseline. In the second part, we show the adversarial examples generated from adversarial regions. In the third part, we design a synthetic case study to show the effectiveness of RGRV.

\subsection{Utility on classification task}\label{subsec:utility}
Due to the reduction of activation patterns, the expressive ability of SDN is slightly inferior to typical DNNs with the same architecture. We empirically show the drop of classification accuracy is acceptable based on two groups of case studies. The first group is the comparison of typical DNNs and SDNs on \emph{MNIST} dataset.
The details of SDNs are shown as follows. 

\begin{table}[h]
\vspace{-0.3cm}
\caption{Classification results on MNIST where sat-rate is the frequency of layers which can provide both active door and inactive door in evaluation}
\vspace{-0.2cm}
  \label{table1}
  \centering
  \vspace{11pt}
    \begin{tabular}{l|l|l|l|l|l}
       MNIST&&(16,12)&(24,18)\ &(32,24)\ &(40,30)\\\midrule\midrule
     SDN& accuracy(\%)& 95.08& 95.24& 95.32& 95.45\\
              & sat-rate(\%)& 81.64& 85.25& 93.49& 96.37\\
     DNN      &accuracy(\%) &97.80 &  98.12& 98.21&98.28\\ \midrule\midrule
    \end{tabular}
    \vspace{-0.2cm}
\end{table}

   1) Each SDN has two hidden layers \(Layer^1\) and \(Layer^2\);
   2) These \emph{SDNs} have 16, 24, 32, 40 groups in \(Layer^1\), respectively and 12, 18, 24, 30 groups in \(Layer^2\),  respectively. We name these SDNs as (16,12), (24,18), (32,24), and (40,30) based on their architecture features;
   3) Each group in \(Layer^1\) has four neurons, and each group in \(Layer^2\) has two neurons.
   4) The \(\alpha\) in these SDNs are set as 2.

Each baseline DNN has two hidden layers. The number neurons in each layer is the same as corresponding \emph{SDNs}. Cross entropy loss and Adam~\cite{Adam} are used to train all the networks 1500 epochs with batch size 256. The evaluation result is shown in Table~\ref{table1}.
Compared with typical DNNs, the accuracy of SDN only drops 2.72, 2.88, 2.89, 2.83 percent respectively. Besides, the accuracy of SDN and the \emph{sat-rate} increase as the numbers of groups in each layer increase.

\subsection{Generating adversarial examples}
Since our verification method is aimed at global verification, i.e., finding all the adversarial regions in the input space, the generation of attacks is only a by-product. As this generation is not  based on local or testing way, it is meaningless to compare its efficiency with state-of-the-art attack generation approaches like~\cite{FindAE2,Carlini017,PapernotMJFCS16,feature,SMT}.  Given parameters\((\mathcal{R},r) = (0.04, 0.2)\), we show the adversarial examples in SDN (20,20)\footnote{(20,20) has 20 groups in \(Layer^1\) and \(Layer^2\) and each group has three neurons. (20,20) is trained on MNIST images which are resized as \(4\ \times\ 4\). All the other settings are same as the \emph{SDNs} in subsection~\ref{subsec:utility}.} and point out the corresponding adversarial regions.

\begin{wrapfigure}[11]{r}{0.3\textwidth}
\centering
\begin{minipage}[c]{0.18\linewidth}
\subfigure {
\includegraphics[width=1\columnwidth]{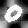}
}\vspace{-0.11in}
\subfigure {
\includegraphics[width=1\columnwidth]{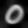}
}
\end{minipage}
\begin{minipage}[c]{0.18\linewidth}
\subfigure {
\includegraphics[width=1\columnwidth]{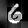}
}\vspace{-0.11in}
\subfigure {
\includegraphics[width=1\columnwidth]{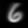}
}
\end{minipage}
\begin{minipage}[c]{0.18\linewidth}
\subfigure {
\includegraphics[width=1\columnwidth]{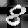}
}\vspace{-0.11in}
\subfigure {
\includegraphics[width=1\columnwidth]{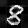}
}
\end{minipage}
\begin{minipage}[c]{0.18\linewidth}
\subfigure {
\includegraphics[width=1\columnwidth]{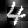}
}\vspace{-0.11in}
\subfigure {
\includegraphics[width=1\columnwidth]{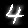}
}
\end{minipage}
\begin{minipage}[c]{0.18\linewidth}
\subfigure {
\includegraphics[width=1\columnwidth]{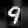}
}\vspace{-0.11in}
\subfigure {
\includegraphics[width=1\columnwidth]{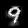}
}
\end{minipage}
\caption{Adversarial examples are in the first line and the samples in second line are classified correctly}
\label{fig:attack}
\end{wrapfigure}
The digits in the first line of Figure~\ref{fig:attack} classified as ``2'' are the adversarial examples of the digits below them classified as ``0, 6, 8, 4, 9'' respectively. 
Take \(4\) as an example, the upper \(4\) is in ``[[18,1],[1,15]]'', denoting the activation pattern  where \(G_{18}^1\) and \(G_1^2\) are active doors, \(G_1^1\) and \(G_{15}^2\) are inactive doors. By inputting ``[[18,1],[1,15]]'', we can find a group of inequations returned by our algorithm, and the conjunction of these inequations represents the corresponding classification region of ``[[18,1],[1,15]]''.
The lower \(4\)  classified as \(4\) is in the activation pattern ``[[4,10],[1,3]]''. The classification region of ``[[18,1],[1,15]]'' is the \emph{protruding region} found by our method which is close to ``[[4,10],[1,3]]'s'' classification region. Moreover, we have found out that there is only one connected component of class 2 in the input space. Obviously, it is easy to select a large number of adversarial examples in the adversarial regions.

\subsection{Feasibility of precise global verification}
As our work is the first global robustness verification work, there is no baseline method existing for this case study. To draw an exact conclusion whether the results of the proposed method are correct, we train a SDN on a two-dimensional synthetic dataset shown in Figure~\ref{sub_fig:dataset}. In Figure~\ref{sub_fig:dataset}, the big blue region at the top-right belongs to the first class and the small blue region at the lower-left which is the ``noise'' in this dataset belongs to the first class as well. The points in the white region belong to the second class. The classification results of the trained SDN are visually shown in
Figure~\ref{sub_fig:classifyrule} where the points in the blue regions belong to the first class and the points in the white region belongs to the second class.
Our method has found the adversarial regions in the orange circles. Obviously, these regions containing adversarial examples are what we do not want. The verification result shows that this SDN is not globally robust.

\section{Conclusion}\label{sec:conclusion}
\begin{wrapfigure}[11]{r}{0.44\textwidth}
\subfigure[synthetic dataset] { \label{sub_fig:dataset}
\includegraphics[width=0.2\columnwidth]{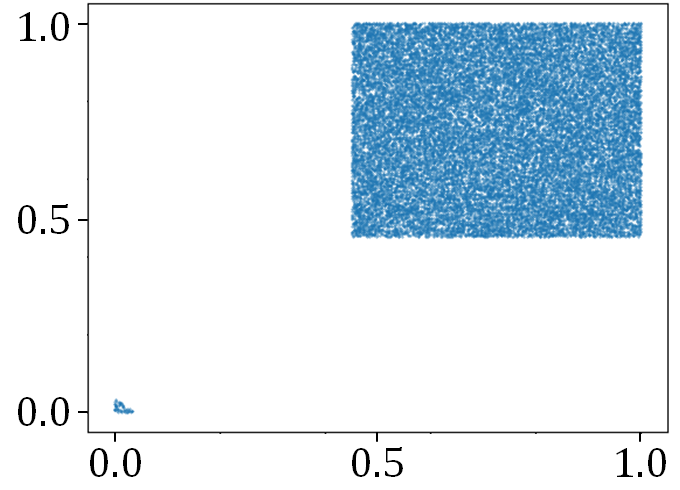}
}
\subfigure[classification rules] { \label{sub_fig:classifyrule}
\includegraphics[width=0.2\columnwidth]{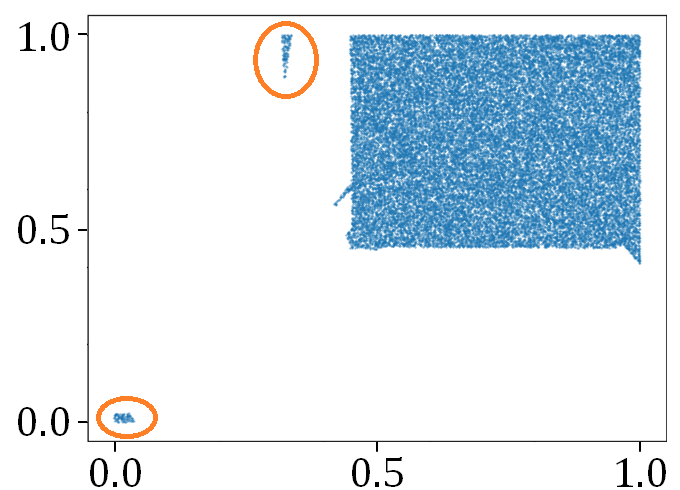}
}
\caption{Synthetic dataset and classification rules}
\label{fig:classification_rules}
\end{wrapfigure}
In this paper, we present a novel global verification framework. To the best of our knowledge,  this is the first work that provides a complete solution to achieving global robust verification for neural networks. Based on the proposed rule-based back-propagation, we analyze the relationship between activation patterns and classification rules, and thus design a new network SDN. Together with a region-based global robustness analysis approach, the verification can be finished in an acceptable duration, dramatically reducing computational complexity. Our evaluation shows that the SDN can perform comparable with classic DNNs, and favourable, the global verification can be achieved, which is unattainable in the past. We hypothesise that SDN is suitable for safety-critical fields, especially for some classification tasks which are not very complex but eager for strict robustness. Developing verification framework for large-scale networks is our further research direction.  
\section{Appendix}
We assign serial numbers to each activation pattern to store the mapping result in a B+ tree for the convenience of global verification:
\begin{align*}
    \Theta.number = &\sum_{h}^{n-1}  ((g_h * group\_num_h + {g'}_h -1)\ \textbf{if}\ (g_h < {g'}_h)
    &\textbf{else}\ (g_h * group\_num_h + {g'}_h))\\
    &*\Pi_{j=h+1}^{n}|\Delta'.j| + (g_n * group\_num_n + {g'}_n)
\end{align*}
where \(n\) is the number of layers, \(h\) represents \(Layer_h\), \(g_h\)(\({g'}_h\)) is the index of \(Layer_h\)'s active(inactive) door, and \(group\_num_h\) is the number of groups in  \({L'}_h\). Two activation patterns \(\Theta_1\) and \(\Theta_2\) satisfy \(\Theta_1.number\ <\ \Theta_2.number\) iff
\begin{align*}
    &(\exists_k\forall_{h(h<k)}\Theta_1.g_h = \Theta_2.g_h \wedge \Theta_1.{g'}_h = \Theta_2.{g'}_h) \bigwedge\\
    &\{\Theta_1.g_k <  \Theta_2.g_k \vee  (\Theta_1.g_k =   \Theta_2.g_k \wedge \Theta_1.{g'}_k <  \Theta_2.{g'}_k)\}
\end{align*}
To prove Theorem~\ref{theorem:common}, we need Lemma~\ref{lemma:share} and Lemma~\ref{lemma:explicit}.
\begin{lemma}\label{lemma:share}
Given two vertexes \(v_i\) and \(v_j (j>i)\), they are adjacent iff they share a common boundary.
\end{lemma}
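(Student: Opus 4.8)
The plan is to prove the two implications separately, since the reverse direction is essentially immediate. \emph{(Common boundary $\Rightarrow$ adjacent.)} If $v_i$ and $v_j$ share a common boundary $\mathbb{F}$, then Definition~\ref{def:common} already supplies a point $x_0$ and a radius $\varepsilon_0>0$ such that the ball $I=\{x:\|x-x_0\|_\infty<\varepsilon_0\}$ satisfies $\forall_{x\in I}(x\in S_i\vee x\in S_j)$, $I\cap S_i\neq\emptyset$ and $I\cap S_j\neq\emptyset$. Taking $x:=x_0$ and $\varepsilon:=\varepsilon_0$, this is verbatim the adjacency clause of Definition~\ref{def:graph}, so $v_i$ and $v_j$ are adjacent and there is nothing more to do here.

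\emph{(Adjacent $\Rightarrow$ common boundary.)} This is the substantive direction. I would start from the adjacency witnesses $x,\varepsilon$, so that the open ball $I_0=\{x':\|x'-x\|_\infty<\varepsilon\}$ lies in $S_i\cup S_j$ and meets both regions. The first observation is structural: each classification region is the solution set of a conjunction $\wedge_\eta\mathds{P}^0_{\gamma\eta}$ of affine inequalities in the input coordinates, hence a convex polyhedron, whose recorded boundaries are exactly the finitely many hyperplanes $\mathbb{F}:\ \sum_i c_i x^0_i+b=0$, one per inequality $\sum_i c_i x^0_i+b>0$; moreover, since $v_i$ and $v_j$ carry distinct activation patterns, $S_i$ and $S_j$ have disjoint interiors. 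The argument then proceeds in three steps. (1) Pick $p\in I_0$ interior to $S_i$ and $q\in I_0$ interior to $S_j$; by convexity the segment $[p,q]\subseteq I_0$ leaves $S_i$ and enters $S_j$, and at the crossing point some defining hyperplane $\mathbb{F}$ — a boundary of both regions, the relevant $\mathds{P}^0$ being positive on the $S_i$ side and negative on the $S_j$ side — meets $I_0$. (2) Choose $x_0\in\mathbb{F}\cap I_0$ that lies on no other defining hyperplane $\mathbb{F}'$ of $S_i$ or $S_j$; this is possible because each $\mathbb{F}\cap\mathbb{F}'$ is a proper affine subspace of $\mathbb{F}$ and there are finitely many of them, so they cannot cover $\mathbb{F}\cap I_0$. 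Then pick $\varepsilon_0>0$ small enough that $I=\{x:\|x-x_0\|_\infty<\varepsilon_0\}\subseteq I_0$ and $I$ is disjoint from every $\mathbb{F}'\neq\mathbb{F}$. (3) Conclude that $\mathbb{F}$ splits $I$ into two open half-balls on which the sign patterns of all defining inequalities are constant, one half contained in $S_i$ and the other in $S_j$; hence $\forall_{x\in I}(x\in S_i\vee x\in S_j)$, $I\cap S_i\neq\emptyset$, $I\cap S_j\neq\emptyset$, and $I\cap\mathbb{F}'=\emptyset$ for all $\mathbb{F}'\neq\mathbb{F}$, which is precisely Definition~\ref{def:common}.

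The main obstacle is Steps (1)--(2): converting the purely topological hypothesis ``a ball is covered by two regions each of which it meets'' into the combinatorial conclusion ``there is a single recorded boundary hyperplane that separates them locally, and one can pass to a sub-ball pierced by that hyperplane alone.'' This rests on finiteness of the hyperplane arrangement cutting out the classification regions, on convexity together with disjointness of interiors, and on a dimension-counting (genericity) argument inside $\mathbb{F}$. It also requires fixing a convention on whether the regions are treated as open or closed — equivalently, working with their closures, or with the tie-broken partition of the input space — so that the inclusion $I_0\subseteq S_i\cup S_j$ is consistent with $S_i\cap S_j=\emptyset$. A minor additional point to verify is that the separating hyperplane produced in Step (1) is genuinely among the boundaries recorded for \emph{both} $v_i$ and $v_j$, not merely some auxiliary hyperplane of the arrangement.
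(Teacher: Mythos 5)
Your proposal is correct in substance and, for the hard direction, takes a genuinely different route from the paper's. The paper, starting from the adjacency witness ball \(I\), locates a point of \(I\) lying on exactly one boundary by contradiction: it assumes no such point exists, builds a sequence of nested balls \(I_{i+1}\subseteq I_i\), each excluding the previously met boundary while meeting a new one, and derives a contradiction from the finiteness of the set of boundaries; it then simply asserts that the surviving boundary \(F\) belongs to both \(v_i\) and \(v_j\). You instead use the affine structure directly: each vertex region is a conjunction of affine inequalities in the input, hence a convex polyhedron, so a segment joining interior points of \(S_i\) and \(S_j\) inside the witness ball must cross a defining hyperplane \(\mathbb{F}\), and a point of \(\mathbb{F}\) avoiding all other defining hyperplanes exists because finitely many proper affine subspaces cannot cover a relatively open piece of \(\mathbb{F}\). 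Your version is more transparent, and the one point you flag as unfinished --- that \(\mathbb{F}\) is genuinely a recorded boundary of \emph{both} vertices --- closes easily in your setup: once \(x_0\) avoids every other hyperplane, all defining functions of \(S_j\) are nonvanishing and hence of constant sign on the small ball except possibly one, and disjointness of \(S_i\) and \(S_j\) forces some defining inequality of \(S_j\) to vanish at \(x_0\), which can only be (a rescaling of) \(\mathbb{F}\); the paper never argues this at all. Both proofs share the same unaddressed foundational issue, which you at least make explicit: without a convention on whether regions are open or closed, the hypotheses \(I\subseteq S_i\cup S_j\) and \(S_i\cap S_j=\emptyset\) are jointly inconsistent for a connected ball. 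The reverse implication is handled identically in both (read off Definition~\ref{def:common} against Definition~\ref{def:graph}).
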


\begin{proof}\label{proof:share}
Given two adjacent vertexes \(v_i\) and \(v_j(j>i)\), the corresponding classifications are \(S_i\) and \(S_j\). According to the definition of adjacent in Definition~\ref{def:graph}, there is a \(x_0\) satisfies
\begin{align*}
    \exists_{\varepsilon>0}\{[\forall_{x'\in\ I}( x'\in S_i \vee x'\in S_j)]
    \wedge  (I \cap S_i \neq \emptyset) \wedge  (I \cap S_j \neq \emptyset)\}
\end{align*}
where \(I =\{x'|\ ||x'-x_0||_\infty<\varepsilon\}\).

There must be a \(x\) in \(I\) on a boundary \(F\) which satisfies \(\exists_{\varepsilon} \forall_{\mathbb{H}_i\neq \mathbb{H}} C \cap \mathbb{H}_i = \emptyset\) where \(C = \{x'|\ ||x'-x||_\infty<\varepsilon_1\}\), \(\mathbb{H}\) and \(\mathbb{H}_i\) are the point set of boundary \(F\) and \(F_i\)s.
Otherwise we can find a series of points \(u_i (i\geqslant0)\) and corresponding balls \(I_i = \{x|\ ||x-u_i||_\infty<\sigma_i\}\) satisfying:
\begin{itemize}
\item \(u_i\) is on the boundary \(F_i\) and  \(\forall_{\varepsilon_i} \mathbb{H}_{i+1} \cap I_i \neq \emptyset \)
\item \(u_{i+1} \in I_i\), \(u_{i+1}\) on \(F_{i+1}\), \(I_{i+1} \subseteq I_i\) and \(I_{i+1} \cap \mathbb{H}_{i} = \emptyset\)
\end{itemize}
Since we have only finite number of boundaries, here comes the contradiction.
Hence we can find a \(u_k\) in \(I\) on \(F\) which is a boundary of both \(v_i\) and \(v_j\) and a corresponding \(I_k\) satisfies the above condition. As \(u_k\) is on the boundary, it satisfies
\(\forall_{x\in I_k}\ ( x\in S_i \vee x\in S_j)
    \wedge  I_k \cap S_i \neq \emptyset \wedge I_k \cap S_j \neq \emptyset ]\)
Thus \(u_k\) satisfies
\begin{align*}
    \exists_{\varepsilon_k>0} [ \forall_{\mathbb{H'}\neq\mathbb{H}_k}\ I_k \cap \mathbb{H'} = \emptyset\ \wedge\ \forall_{x\in I_k}\ ( x\in S_i \vee x\in S_j)
    \wedge  I_k \cap S_i \neq \emptyset \wedge I_k \cap S_j \neq \emptyset ]
\end{align*}

and \(F\) is a common boundary shared by \(v_i\) and \(v_j\).

The sufficient condition is obvious.
\end{proof}

\begin{lemma}\label{lemma:explicit}
Given two adjacent vertexes \(v_i\) and \(v_j(j>i)\), the shared common boundary of them comes from the \(explicit\ rules\) of \(v_i\).
\end{lemma}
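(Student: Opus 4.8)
The plan is to feed Lemma~\ref{lemma:share} into Definition~\ref{def:common}: adjacency of \(v_i\) and \(v_j\) (\(j>i\)) gives a common boundary \(\mathbb{F}\), witnessed by a point \(x_0\) and radius \(\varepsilon_0>0\) whose ball \(I\) is contained in \(S_i\cup S_j\), meets both, and meets no boundary of \(v_i\) or \(v_j\) other than \(\mathbb{F}\). Since \(S_i\) is cut out in the input space by the explicit and implicit rules attached to \(v_i\), \(\mathbb{F}\) is the zero set of one of those linear inequalities; it is either \emph{explicit} (a decision inequality from \emph{MAP-OUT}, or an \emph{SDNActiveCon}/\emph{SDNInactiveCon} constraint ``a neuron of the active, resp.\ inactive, door of some layer is positive, resp.\ negative''), or \emph{implicit}, of the form ``\({x'}^h_\ell\gtrless 0\)'' for a neuron \(\ell\) in a \emph{trivial} group \(G^h_k\) whose index \(k\) is smaller than the active-door index \(g_h\) (resp.\ the inactive-door index \({g'}_h\)) of \(\Theta_i\) at layer \(h\). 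It then suffices to show the implicit case cannot occur when \(j>i\).

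So assume \(\mathbb{F}\) is implicit and let \(h\) be the first layer at which \(\Theta_i\) and \(\Theta_j\) differ. Since both activation patterns carry the same door assignments for layers \(1,\dots,h-1\), those assignments are constant on all of \(I=(S_i\cap I)\cup(S_j\cap I)\); hence every layer-\(h\) pre-activation is one fixed affine function of \(x^0\) on \(I\) and varies continuously, so the layer-\(h\) doors can change across \(\mathbb{F}\) only if some layer-\(h\) pre-activation neuron vanishes on \(\mathbb{F}\), i.e.\ \(\mathbb{F}=\{{x'}^h_m=0\}\). As \(\mathbb{F}\) is implicit, \(m\) lies in a trivial group \(G^h_k\) of \(\Theta_i\); say the implicit rule is ``\({x'}^h_m<0\)'' (so \(k<g_h\); the dual rule ``\({x'}^h_m>0\)'', with \(k<{g'}_h\), is symmetric). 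Because \(I\) meets no other boundary of \(v_i\), every neuron of \(G^h_k\) besides \(m\) keeps its sign on \(I\); and since the activation pattern genuinely changes across \(\mathbb{F}\), \(G^h_k\) must change status, which — \(m\) being its only sign-changing neuron — forces \(G^h_k\) to become fully positive, i.e.\ active, on the \(v_j\)-side. As \(k<g_h\) and no group of smaller index is affected, \(G^h_k\) is the active door of layer \(h\) in \(\Theta_j\): so \(\Theta_j\) has active-door index \(k<g_h\) at layer \(h\), the same inactive-door index there, and the same layers \(1,\dots,h-1\).

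To conclude, I would invoke the serial-numbering scheme of the appendix: its layer-\(h\) digit increases first in the active-door index and then in the inactive-door index, earlier layers being more significant. Since \(\Theta_i\) and \(\Theta_j\) first differ at layer \(h\), where \(\Theta_j\) carries the strictly smaller active-door index (and an equal inactive-door index), this yields \(v_j.\mathit{number}<v_i.\mathit{number}\), i.e.\ \(j<i\), contradicting \(j>i\). The symmetric case, in which \(G^h_k\) turns fully negative and the inactive-door index drops to \(k<{g'}_h\), is contradicted the same way. Hence \(\mathbb{F}\) must come from an explicit rule of \(v_i\).

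The hard part will be the middle paragraph: justifying that the common boundary is the zero set of a \emph{single} layer-\(h\) pre-activation neuron and that crossing it necessarily relocates a layer-\(h\) door to a \emph{smaller} index while nothing else relevant changes at that layer. This leans on the full strength of the common-boundary conditions (only \(v_i,v_j\) meet \(I\); no other boundary meets \(I\)) to force exactly one sign flip on \(\mathbb{F}\), to identify that neuron as the unique witness keeping \(G^h_k\) trivial in \(\Theta_i\), and to exclude any change below layer \(h\); the ``first active/inactive group is the door'' convention then forces the index to drop. Once that geometric picture is secured, extracting \(j<i\) is routine bookkeeping with the numbering formula.
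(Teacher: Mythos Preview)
Your proposal is correct and follows essentially the same route as the paper: identify the first layer \(h\) at which \(\Theta_i\) and \(\Theta_j\) differ, use continuity of the pre-activations up to layer \(h\) to pin the common boundary to a single layer-\(h\) neuron's zero set, and then observe that if this neuron sits in a trivial group (the implicit case) the crossing forces a door index at layer \(h\) to \emph{decrease}, which together with the serial-numbering scheme yields \(j<i\), a contradiction. The paper organises this as a three-case elimination (boundary from layer \(>h\), from layer \(<h\), implicit at layer \(h\)) rather than your direct ``assume implicit'' contradiction, but the substance is the same.
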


\begin{proof}\label{proof:explicit}
The activation pattern \(\Theta_i\) and \(\Theta_j\) of \(v_i\) and \(v_j\) satisfy \(\Theta_i.number=i \ >\ \Theta_j.number=j\) thus
\begin{align*}
    &\forall_{t(t<k)}\Theta_i.g_t = \Theta_j.g_t \wedge \Theta_i.{g'}_t = \Theta_j.{g'}_t \bigwedge\\
    &\{\Theta_i.g_k <  \Theta_j.g_k \vee  (\Theta_i.g_k =   \Theta_j.g_k \wedge \Theta_i.{g'}_k <  \Theta_i.{g'}_k)\}
\end{align*}
This indicates the first change of door happens on \({L'}^k\).  The change of activation pattern leads to the ``mutation'' of neurons in \({L'}^t(t>k)\). However the change of neurons in \({L'}^t(t\leq k)\) is continuous. According to the proof of Lemma~\ref{lemma:share}, we can find a path in \(D_{\alpha}\) which crosses and only crosses the \(F\). Thus When the point on this path approaches the boundary, there is at most one inactive neuron corresponding to \(F\) approaches 0 in \({L'}^t(t\leq k)\). Here we consider where the \(F\) comes from:
\begin{itemize}
\item \(F\) comes from \(Layer_t(t>k)\). The change of sign of \({x'}_l^t\) will not influence the activation pattern in \({L'}^k\) which contradicts to ``the first change of door happens on \({L'}^k\)''

\item \(F\) comes from  \(Layer_t(t<k)\). If it changes the activation pattern, it contradicts to ``the first change of door happens on \({L'}^k\)''. Otherwise, there must be another inactive neuron in \({L'}^k\) changes the sign at the same time which contradicts to ``there is at most one inactive neuron corresponding to \(F\) approaches 0 in \({L'}^t(t\leq k)\)''

\item \(F\) comes from \(implicit\ rules\) in \(Layer_k\). If the activation pattern does not change, here comes the contradiction to ``the first change of door happens on \({L'}^k\)''. If the activation pattern changes, according to the definition of \(implicit\ rules\), either the index of \emph{active door} \(g_k\) or \emph{inactive door} \({g'}_k\) will become smaller, that is to say, the result is a activation pattern \(v_u(u<i)\) instead of \(v_j\). Thus we have a contradiction
\end{itemize}
We eventually find out that \(F\) comes from \(explicit\ rules\) of \(v_i\) by a process of elimination.

\end{proof}
Based on Lemma~\ref{lemma:share} and Lemma~\ref{lemma:explicit}, the correctness of Theorem~\ref{theorem:common} is obvious. 



\bibliographystyle{plain}
\bibliography{nips}

\end{document}